\definecolor{orange}{rgb}{1,0.4,0.0}
\DeclarePairedDelimiterXPP{\KL}[2]{D_\textnormal{KL}}{(}{)}{}{%
#1\:\delimsize\|\:#2%
}
\DeclarePairedDelimiterXPP{\RD}[2]{D_{$\alpha$}}{(}{)}{}{%
#1\:\delimsize\|\:#2%
}
\newcommand{\innerproduct}[2]{\langle #1, #2 \rangle}
\DeclarePairedDelimiterXPP\Prob[1]{\mathbb{P}}{\lbrace}{\rbrace}{}{

#1}
\DeclarePairedDelimiterXPP{\lnorm}[2]{}{\lVert}{\rVert}{_{#2}}{#1}
\newcommand{\REG}{\ensuremath{\textnormal{REG}}}
\newcommand{\bE}{\ensuremath{\mathbb{E}}}
\newcommand{\bP}{\ensuremath{\mathbb{P}}}
\newcommand{\bR}{\ensuremath{\mathbb{R}}}
\newcommand{\cA}{\ensuremath{\mathcal{A}}}
\newcommand{\cB}{\ensuremath{\mathcal{B}}}
\newcommand{\cH}{\ensuremath{\mathcal{H}}}
\newcommand{\cN}{\ensuremath{\mathcal{N}}}
\newcommand{\cO}{\ensuremath{\mathcal{O}}}
\newcommand{\cX}{\ensuremath{\mathcal{X}}}
\newcommand{\mi}{\textup{I}}
\newcommand{\ent}{\textup{H}}
\newcommand{\kl}{\textup{D}_{\textnormal{KL}}}
\newtheorem{definition}{Definition}
\newtheorem{theorem}{Theorem}
\newtheorem{assumption}{Assumption}
\newcommand\xqed[1]{%
  \leavevmode\unskip\penalty9999 \hbox{}\nobreak\hfill
  \quad\hbox{#1}}
\newenvironment{proof}{\emph{Proof:}}{\xqed{$\blacksquare$}}
\begin{document}

\title{Thompson Sampling Regret Bounds for Contextual Bandits with sub-Gaussian rewards
\thanks{This work was partially supported by (i) the Wallenberg AI, Autonomous Systems and Software Program (WASP) funded by the Knut and Alice Wallenberg Foundation and (ii) the Swedish Research Council under contract 2019-03606.}}


\author{\IEEEauthorblockN{Amaury Gouverneur, Borja Rodríguez-Gálvez, Tobias J. Oechtering, and Mikael Skoglund} \IEEEauthorblockA{Division of Information Science and Engineering (ISE)}\IEEEauthorblockA{KTH Royal Institute of Technology}\IEEEauthorblockA{\texttt{\{amauryg,borjarg,oech,skoglund\}@kth.se}}}

\maketitle
\everypar{\looseness=-1}

\begin{abstract}
In this work, we study the performance of the Thompson Sampling algorithm for Contextual Bandit problems based on the framework introduced by \cite{neu2022lifting} and their concept of lifted information ratio. First, we prove a comprehensive bound on the Thompson Sampling expected cumulative regret that depends on the mutual information of the environment parameters and the history. Then, we introduce new bounds on the lifted information ratio that hold for sub-Gaussian rewards, thus generalizing the results from \cite{neu2022lifting} which analysis requires binary rewards. Finally, we provide explicit regret bounds for the special cases of unstructured bounded contextual bandits, structured bounded contextual bandits with Laplace likelihood, structured Bernoulli bandits, and bounded linear contextual bandits. 
\end{abstract}

\section{Introduction}
\label{sec:introduction}

Contextual bandits encompasses sequential decision-making problems where at each round an agent must choose an action that results in a reward. This action is chosen based on a context of the environment and a history of past contexts, rewards, and actions~\cite{langford2007epoch}.\footnote{
This setting is also known as bandit problems with covariates~\cite{sarkar1991one, woodroofe1979one}, associative reinforcement learning~\cite{barto1985pattern, gullapalli1990associative, kaelbling1994associative}, or associative bandit problems~\cite{strehl2006experience}.} Contextual bandits have become an important subset of sequential decision-making problems due to their multiple applications in healthcare, finance, recommender systems, or telecommunications (see~\cite{bouneffouf2020survey} for a survey on different applications).

There is an interest to study the theoretical limitations of algorithms for contextual bandits. This is often done considering their \emph{regret}, which is the difference in the collected rewards that an algorithm obtains compared to an oracle algorithm that chooses the optimal action at every round~\cite{abe2003reinforcement, dudik2011efficient, chu2011contextual, agarwal2012contextual, foster2020beyond, foster2021efficient,  zhang2022feel, neu2022lifting}.

A particularly successful approach is the \emph{Thomson Sampling (TS) algorithm}~\cite{thompson1933likelihood}, and was originally introduced for multi armed bandits, which are sequential decision-making problems without context. Despite its simplicity, this algorithm has been shown to work remarkably well for contextual bandits~\cite{scott2010modern, chapelle2011empirical}. This algorithm has been studied for multi armed bandits~\cite{russo2014learning, russo2016information, dong2018information} and in the more general context of Markov decision processes~\cite{gouverneur2022information}. A crucial quantity for the analysis of TS in the multi armed bandit setting is the \emph{information ratio} \cite{russo2014learning}, which trades off achieving low regret and gaining information about the optimal action. 

In \cite{neu2022lifting}, the authors extend this concept to the \emph{lifted information ratio} to fit the more challenging setting of contextual bandits, where the optimal action changes at every round based on the context. However, their main results are limited to contextual bandits with binary rewards. Albeit this is a common setting, as often rewards represent either a success or a failure~\cite{chapelle2011empirical}, it fails to capture more nuanced scenarios, like dynamic pricing where rewards represent revenue~\cite{mueller2019low}.

In this paper, we extend the results from \cite{neu2022lifting} to contextual bandits with sub-Gaussian rewards. These rewards include the common setup where the rewards are bounded, but are not necessarily binary~\cite{abe2003reinforcement, dudik2011efficient, chu2011contextual, agarwal2012contextual, foster2020beyond, foster2021efficient,  zhang2022feel}, or setups where the expected reward is linear but is corrupted by a sub-Gaussian noise~\cite{mueller2019low}.

More precisely, our contributions in this paper are:
\begin{itemize}
    \item A comprehensive bound on the TS regret that depends on the mutual information between the environment parameters and the history collected by the agent (\Cref{th:TS_regret_bound_mi}). Compared to \cite[Theorem 1]{neu2022lifting}, this bound highlights that, given an average lifted information ratio, the regret of TS does not depend on all the uncertainty of the problem, but only on the uncertainty that can be explained by the data collected from the TS algorithm. 
    \item An alternative proof of \cite[Theorem 2]{neu2022lifting} showing that, if the log-likelihood of the rewards satisfies certain regularity conditions, the TS regret is bounded by a measure of the complexity of the parameters' space in cases where this is not countable. The presented proof (\Cref{th:TS_regret_bound_net}) highlights that the rewards need not to be binary.
    \item Showing the lifted information ratio is bounded by the number of actions $|\cA|$ in unstructured settings (\Cref{lemma:finite_actions_information_ratio}) and by the dimension $d$ when the expected rewards are linear (\Cref{lemma:linear_expected_rewards}). These bounds extend \cite[Lemmata 1 and 2]{neu2022lifting} from the case where the rewards are binary to the more general setting where they are sub-Gaussian.
    \item \looseness=-1 Explicit regret bounds for particular settings as an application of the above results (\Cref{sec:applications}). Namely, bounds for (i) bounded unstructured contextual bandits that show that TS has a regret with the desired \cite{dudik2011efficient, beygelzimer2011contextual} rate of $O(\sqrt{|\cA|T \log |\cO|})$, (ii) bounded structured contextual bandits including those with Laplace likelihoods and Bernoulli bandits, and (iii) bounded linear bandits that show that the TS regret is competitive with \texttt{LinUCB}'s~\cite{chu2011contextual}.
\end{itemize}

\section{Preliminaries}
\label{sec:preliminaries}
\subsection{General Notation}
\label{subsec:notation}

Random variables $X$ are written in capital letters, their realizations $x$ in lowercase letters, their outcome space in calligraphic letters $\cX$, and its distribution is written as $\bP_X$. The density of a random variable $X$ with respect to a measure $\mu$ is written as $f_X \coloneqq \frac{d\bP_X}{d\mu}$. When two (or more) random variables $X, Y$ are considered, the conditional distribution of $Y$ given $X$ is written as $\bP_{Y|X}$ and the notation is abused to write their joint distribution as $\bP_X \bP_{Y|X}$.

\subsection{Problem Setting: Contextual Bandits}
\label{subsec:problem_setting}

A \emph{contextual bandit} is a sequential decision problem where, at each time step, or round $t \in [T]$, an agent interacts with an environment by observing a context $X_t \in \cX$ and by selecting an action $A_t \in \cA$ accordingly. Based on the context and the action taken, the environment produces a random reward $R_t \in \bR$. The data is collected in a history $H^{t+1} = H^t \cup H_{t+1}$, where $H_{t+1} = \{ A_t, X_t, R_t \}$. The procedure repeats until the end of the time horizon, or last round $t=T$. 

In the Bayesian setting, the environment is characterized by a parameter $\Theta \in \cO$ and a contextual bandit problem $\Phi$ is completely defined by a prior environment parameter $\bP_\Theta$, a context distribution $\bP_X$, and a fixed reward kernel $\kappa_{\textnormal{reward}}: \cB(\bR) \times (\cX, \cA, \cO) \to [0,1]$ such that $\bP_{R_t | X_t, A_t, \Theta} = \kappa_{\textnormal{reward}}\big(\cdot, (X_t, A_t, \Theta) \big)$. Thus, the reward may be written as $R_t = R(X_t, A_t, \Theta)$ for some (possibly random) function $R$.

The task in a Bayesian contextual bandit is to learn a policy $\varphi = \{ \varphi_t : \cX \times \cH^t \to \cA \}_{t=1}^T$ taking an action $A_t$ based on the context $X_t$ and on the past collected data $H^t$ that maximizes the \emph{expected cumulative reward} $R_\Phi(\varphi) \coloneqq \bE \big[\sum_{t=1}^T R(X_t, \varphi_t(X_t, H^t), \Theta) \big]$. 

\subsubsection{The Bayesian expected regret}
\label{subsubsec:bayesian_expected_regret}

The Bayesian expected regret of a contextual bandit problem measures the difference between the performance of a given policy and the optimal one, which is the policy that knows the true reward function and selects the actions yielding the highest expected reward.
For a given contextual bandit problem, we define the performance of the optimal policy as the \emph{optimal cumulative reward}.

\begin{definition}
    \label{def:optimal_cumulative_reward}
    The \emph{optimal cumulative reward} of a contextual bandit problem $\Phi$ is defined as
    \begin{equation*}
        R^\star_\Phi \coloneqq \sup_{\psi} \bE \bigg[ \sum_{t=1}^T R(X_t, \psi(X_t, \Theta), \Theta) \bigg],
    \end{equation*}
    where the supremum is taken over the decision rules $\psi: \cX \times \cO \to \cA$ such that the expectation above is defined.
\end{definition}

A policy that achieves the supremum of~\Cref{def:optimal_cumulative_reward} is denoted as $\psi^\star$ and the actions it generates are $A^\star_t \coloneqq \psi^\star(X_t, \Theta)$.

\begin{assumption}[Compact action set]
    \label{ass:compact_action_set}
    The set of actions $\cA$ is compact. Therefore, an optimal policy $\psi^\star$ always exists.
\end{assumption}

The difference between the expected cumulative reward of a policy $\varphi$ 
and the optimal cumulative reward is the \emph{Bayesian expected regret}.

\begin{definition}
    \label{def:bayesian_expected_regret}
    The \emph{Bayesian expected regret} of a policy $\varphi$ in a contextual bandit problem $\Phi$ is defined as
    \begin{equation*}
        \REG_\Phi(\varphi) \coloneqq R^\star_\Phi - R_\Phi(\varphi).
    \end{equation*}
\end{definition}

\subsubsection{The Thompson sampling algorithm}
\label{subsubsec:thompson_sampling_algorithm}

Thomson Sampling (TS) is an elegant algorithm to solve decision problems when the environment $\Theta$ is unknown. It works by randomly selecting actions according to their posterior probability of being optimal. More specifically, at each round $t \in [T]$, the agent samples a Bayes estimate $\hat{\Theta}_t$ of the environment parameters $\Theta$ based on the past collected data $H^t$ and selects the action given the optimal policy $\psi^\star$ for the estimated parameters and the observed context $X_t$, that is $\hat{A}_t = \psi^\star(X_t, \hat{\Theta}_t)$. The history collected by the TS algorithm up to round $t$ is denoted $\hat{H}^t$. The pseudocode for this procedure is given in~\Cref{alg:Thompson_Sampling}. Therefore, the Bayesian cumulative reward $R^\textnormal{TS}_\Phi$ of the TS algorithm is
\begin{equation*}
    R^\textnormal{TS}_\Phi \coloneqq \bE \bigg[ \sum_{t=1}^T R(X_t, \psi^\star(X_t, \hat{\Theta}_t), \Theta) \bigg],
\end{equation*}
where $\hat{\Theta}_t$ has the property that $\bP_{\hat{\Theta}|\hat{H}^t} = \bP_{\Theta|\hat{H}^t}$ a.s.. The Bayesian expected regret of the TS is denoted $\REG^\textnormal{TS}_\Phi$ and is usually referred to as the \emph{TS cumulative regret}.

\subsubsection{Notation specific to contextual bandits}
\label{subsecsub:problem_setting}

To aid the exposition, and since the $\sigma$-algebras of the history $\hat{H}^t$ and the context $X_t$ are often in the conditioning of the expectations and probabilities used in the analysis, similarly to~\cite{russo2016information, neu2022lifting}, we define the operators $\bE_t[\cdot] \coloneqq \bE[\cdot|\hat{H}^t, X_t]$ and $\bP_t[\cdot] \coloneqq \bP[\cdot|\hat{H}^t, X_t]$, whose outcomes are $\sigma(\cH^{t} \times \cX)$-measurable random variables and $\cH = \cA \times \cX \times \bR$. Similarly, we define $\mi_t(\Theta; R_t | \hat{A}_t) \coloneqq \bE_t[ \kl( \bP_{R_t | \hat{H}^t, X_t, \hat{A}_t, \Theta} \lVert \bP_{R_t | \hat{H}^t, X_t, \hat{A}_t} )]$ as the \emph{disintegrated} conditional mutual information between the parameter $\Theta$ and the reward $R_t$ given the action $\hat{A}_t$, \emph{given the history $\hat{H}^t$ and the context $X_t$}, see~\cite[Definition 1.1]{negrea2019information}, which is itself as well a $\sigma(\cH^t \times \cX)$-measurable random variable.

\begin{algorithm}[ht]
    \caption{Thompson Sampling algorithm}
    \label{alg:Thompson_Sampling}
    \begin{algorithmic}[1]
        \STATE {\bfseries Input:} environment parameters prior $\bP_{\Theta}$.
        \FOR{$t=1$ {\bfseries to} T}
            \STATE Observe the context $X_t \sim \bP_{X}$.
            \STATE Sample a parameter estimation $\smash{\hat{\Theta}_t \sim \bP_{\Theta|\hat{H}^t}}$.
            \STATE Take the action $\hat{A}_t = \psi^\star(X_t,\hat{\Theta}_t)$.
            \STATE Collect the reward $R_t = R(X_t,\hat{A}_t, \Theta)$.
            \STATE Update the history $\hat{H}^{t+1}=\{\hat{H}^t,\hat{A}_t,X_t,R_t\}$.
        \ENDFOR
    \end{algorithmic}
\end{algorithm}

\section{Main results}
\label{sec:main}
In this section, we present our main results to bound the TS cumulative regret for contextual bandits. In~\Cref{subsec:bounding_ts_cumulative_regret}, we first (\Cref{th:TS_regret_bound_mi}) prove a comprehensive bound on the TS cumulative regret that, rather than depending on the entropy of the environment's parameters as \cite[Theorem 1]{neu2022lifting}, it depends on their mutual information with the history. This highlights that, given an average lifted information ratio, the TS cumulative regret does not depend on the uncertainty of the parameters, but on the uncertainty of the parameters explained by the history.  Then (\Cref{th:TS_regret_bound_net}), we slightly relax the assumptions of~\cite[Theorem 2]{neu2022lifting} and digest this result with an alternative proof, which formalizes that the TS cumulative regret is bounded by the complexity of the environment's space. In~\Cref{subsec:Bounding the lifted information ratio}, we provide bounds on the lifted information ratio. First (\Cref{lemma:finite_actions_information_ratio}), without assuming any structure in the rewards, we show a bound that scales linearly with the number of actions. We then (\Cref{lemma:linear_expected_rewards}) consider the special case of linear contextual bandits and show that in that case we can obtain a bound that scales with the dimension of the problem. These results, in turn, generalize \cite[Lemmata 1 and 2]{neu2022lifting}, which are only valid for binary losses.

\subsection{Bounding the TS cumulative regret}
\label{subsec:bounding_ts_cumulative_regret}

In the contextual bandits setting, the concept of \emph{lifted information ratio} was introduced in \cite{neu2022lifting} as the random variable 
\begin{equation*}
    \Gamma_t \coloneqq \frac{\bE_t[ R^\star_t - R_t]^2}{\mi_t(\Theta; R_t|\hat{A}_t)},
\end{equation*}
where $R_t$ is the reward collected by the TS algorithm and $R^\star_t$ is the one collected playing optimally, i.e. $R(X_t, \psi^\star_t(X_t, \Theta), \Theta)$. This concept was inspired by the \emph{information ratio} from~\cite{russo2016information} in the non-contextual multi armed bandit problem setting and it is closely related to the \emph{decoupling coefficient} from~\cite{zhang2022feel}.

In the proof of~\cite[Theorem 1]{neu2022lifting}, it is shown that
\begin{equation}
    \REG_\Phi^\textnormal{TS} \leq \sqrt{ \bigg(\sum_{t=1}^T \bE[\Gamma_t] \bigg) \bigg(\sum_{t=1}^T \mi(\Theta; R_t | \hat{H}^t, X_t, \hat{A}_t) \bigg)}.
    \label{eq:intermediate_result_neu}
\end{equation}
This is employed to show a result bounding the TS cumulative regret for problems with a countable environment space $\Theta$. However, this intermediate step can also be leveraged to obtain a more general, and perhaps more revealing bound on the TS cumulative regret.

\begin{restatable}{theorem}{TS_regret_bound_mi}
\label{th:TS_regret_bound_mi}
Assume that the average of the lifted information ratios is bounded $\frac{1}{T} \sum_{t=1}^T \bE[\Gamma_t] \leq \Gamma$ for some $\Gamma > 0$. Then, the TS cumulative regret is bounded as
\begin{align}
\REG^\textnormal{TS}_{\Phi}
       &\leq  \sqrt{\Gamma T \mi(\Theta; \hat{H}^{T+1})} \nonumber \\ 
       &= \sqrt{\Gamma T \bE[\kl(\bP_{\Theta| \hat{H}^{T+1}} \lVert \bP_\Theta)]} . \nonumber
       \label{eq:ts_regret_bound_mi}
\end{align} 
\end{restatable}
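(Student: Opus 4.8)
The plan is to start from the intermediate inequality~\eqref{eq:intermediate_result_neu} established in the proof of~\cite[Theorem 1]{neu2022lifting},
\begin{equation*}
\REG_\Phi^\textnormal{TS} \leq \sqrt{\Bigl(\sum_{t=1}^T \bE[\Gamma_t]\Bigr)\Bigl(\sum_{t=1}^T \mi(\Theta; R_t | \hat{H}^t, X_t, \hat{A}_t)\Bigr)},
\end{equation*}
and to bound its two factors separately. The first factor is immediately controlled by the hypothesis, $\sum_{t=1}^T \bE[\Gamma_t] \leq \Gamma T$. The work therefore concentrates on the second factor: I would show that the per-round conditional mutual informations telescope into a single mutual information between the parameter and the whole history collected by Thompson Sampling, $\sum_{t=1}^T \mi(\Theta; R_t | \hat{H}^t, X_t, \hat{A}_t) \leq \mi(\Theta; \hat{H}^{T+1})$.

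For this I would invoke the chain rule for mutual information. Since the history is built incrementally as $\hat{H}^{t+1} = \hat{H}^t \cup \{\hat{A}_t, X_t, R_t\}$, with $\hat{H}^1$ carrying no information about $\Theta$, the chain rule gives
\begin{equation*}
\mi(\Theta; \hat{H}^{T+1}) = \sum_{t=1}^T \mi(\Theta; X_t, \hat{A}_t, R_t | \hat{H}^t),
\end{equation*}
and each summand decomposes further as $\mi(\Theta; X_t | \hat{H}^t) + \mi(\Theta; \hat{A}_t | \hat{H}^t, X_t) + \mi(\Theta; R_t | \hat{H}^t, X_t, \hat{A}_t)$. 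The first term vanishes because $X_t \sim \bP_X$ is drawn independently of $(\Theta, \hat{H}^t)$; the second term vanishes because, conditionally on $\hat{H}^t$, the Bayes estimate $\hat{\Theta}_t$ is independent of $\Theta$ by the defining posterior-sampling property $\bP_{\hat{\Theta}|\hat{H}^t} = \bP_{\Theta|\hat{H}^t}$, and hence so is $\hat{A}_t = \psi^\star(X_t, \hat{\Theta}_t)$ once one additionally conditions on the independent context $X_t$. Thus only the reward term survives in each round, which yields the claimed identity (non-negativity of mutual information would already give the inequality, which is all that is needed). Combining the two factor bounds gives $\REG_\Phi^\textnormal{TS} \leq \sqrt{\Gamma T\, \mi(\Theta; \hat{H}^{T+1})}$, and the final equality in the statement is the standard disintegration identity $\mi(\Theta; \hat{H}^{T+1}) = \bE\bigl[\kl(\bP_{\Theta|\hat{H}^{T+1}} \lVert \bP_\Theta)\bigr]$.

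I expect the main obstacle to be the rigorous justification of the two vanishing terms, i.e. that neither the fresh context $X_t$ nor the posterior-sampled action $\hat{A}_t$ conveys information about $\Theta$ beyond what $\hat{H}^t$ already contains. This amounts to reading off the conditional-independence structure induced by \Cref{alg:Thompson_Sampling} (the context is exogenous; $\hat{\Theta}_t$ is drawn from the current posterior and is therefore conditionally independent of $\Theta$ given $\hat{H}^t$, and stays so after conditioning on the independent $X_t$) and keeping track of the measurability of the disintegrated quantities; beyond that, only the chain rule and these independence facts are required.
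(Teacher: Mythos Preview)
Your proposal is correct and follows essentially the same route as the paper: start from~\eqref{eq:intermediate_result_neu}, bound the first factor by $\Gamma T$, and use the chain rule twice on $\mi(\Theta;\hat{H}^{T+1})$ to recover the per-round terms $\mi(\Theta;R_t|\hat{H}^t,X_t,\hat{A}_t)$. The only difference is cosmetic: the paper drops the term $\mi(\Theta;X_t,\hat{A}_t|\hat{H}^t)$ using non-negativity, whereas you argue (correctly, from the conditional-independence structure of \Cref{alg:Thompson_Sampling}) that this term is in fact zero, so your telescoping identity is an equality rather than an inequality.
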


\begin{proof}
   The proof follows by an initial application of the chain rule of the mutual information. Namely,
   \begin{align*}
       \smash{\mi(\Theta; \hat{H}^{T+1}) = \sum\nolimits_{t=1}^T \mi(\Theta; \hat{H}_{t+1} | \hat{H}^t).}
   \end{align*}
   Applying the chain rule once more to each term shows that
   \begin{align*}
       \mi(\Theta; \hat{H}_{t+1} | \hat{H}^t) = \mi(\Theta; X_t, \hat{A}_t | \hat{H}^t) +  \mi(\Theta; R_t | \hat{H}^t, X_t, \hat{A}_t).
   \end{align*}
   Finally, the non-negativity of the mutual information completes the proof as $\mi(\Theta; \hat{H}_{t+1} | \hat{H}^t) \geq  \mi(\Theta; R_t | \hat{H}^t, X_t, \hat{A}_t)$.
\end{proof}

\Cref{th:TS_regret_bound_mi} has~\cite[Theorem 1]{neu2022lifting} as a corollary by noting that for countable parameters' spaces $ \mi(\Theta; \hat{H}^{T+1}) \leq \ent(\Theta)$ and that if $\Gamma_t \leq \Gamma$ a.s. for all $t \in [T]$, then $\frac{1}{T} \sum_{t=1}^T \bE[\Gamma_t] \leq \Gamma$. This seemingly innocuous generalization gives us insights on the TS cumulative regret via the following two factors:
\begin{itemize}
    \item The bound on the average of lifted information ratios $\Gamma$. This measures the maximum information gain on the environment parameters on average through the rounds. This is different to the requirement that $\bE[\Gamma_t] \leq \Gamma'$ from \cite{neu2022lifting}, which penalizes equally rounds
    with large or little information gain. 
    This may be relevant in scenarios where the lifted information ratio can vary drastically among rounds.
    \item The mutual information between the parameters $\Theta$ and the history $\hat{H}^t$. 
    Contrary to the entropy $\ent(\Theta)$ featured in the bound \cite[Theorem 1]{neu2022lifting}, 
    which is a measure of the uncertainty of the parameters, the mutual information $\mi(\Theta; \hat{H}^t)$ measures the uncertainty of the parameters that is explained by the history of TS since 

     \begin{equation*}
        \mi(\Theta; \hat{H}^t) = \underbrace{\ent(\Theta)}_{\textnormal{Uncertainty of $\Theta$}} - \underbrace{\ent(\Theta|\hat{H}^t).}_{\substack{\textnormal{Uncertainty of $\Theta$} \\ \textnormal{not explained by $\hat{H}^t$}}}
    \end{equation*}
    Moreover, the mutual information is the relative entropy between the TS posterior on the parameters and the true parameters' prior, i.e. $\bE[\kl(\bP_{\Theta| \hat{H}^{T+1}} \lVert \bP_\Theta)]$, which measures how well is the TS posterior aligned with the true parameters' distribution in the last round. As for the TS algorithm we can sample from the posterior $\bP_{\Theta| \hat{H}^{T+1}}$, there are situations where the posterior is known analytically and thus this relative entropy can be numerically estimated at each round~\cite[Section 6]{russo2014learning}. 
\end{itemize}

In~\cite{neu2022lifting}, for binary rewards, i.e. $R: \cX \times \cA \times \cO \to \{ 0, 1 \}$, it is shown that regularity on the reward's log-likelihood is sufficient to guarantee a bound on the TS cumulative regret \emph{à la Lipschitz maximal inequality} \cite[Lemma 5.7]{van2014probability}. More precisely, if the parameters' space $\cO$ is a metric space $(\cO, \rho)$, they impose that the log-likelihood is Lipschitz continuous for all actions and all contexts. However, requiring the log-likelihood random variable to be a Lipschitz process is sufficient, as we will show shortly. 

\begin{assumption}[Lipschitz log-likelihood]
    \label{ass:regularity_condition_likelihood}
    There is a random variable $C > 0$ that can depend only on $R_t, X_t$, and $\hat{A}_t$ such that $| \log f_{R_t|X_t, \hat{A}_t, \Theta= \theta}(R_t) - \log f_{R_t|X_t, \hat{A}_t, \Theta=\theta'}(R_t)| \leq C \rho(\theta, \theta')$ a.s. for all $\theta, \theta' \in \cO$.
\end{assumption}

With this regularity condition, the TS cumulative regret can be bounded from above by the ``complexity" of the parameter's space $\cO$, measured by the $\epsilon$-covering number of the space.

\begin{definition}
    \label{def:epsilon_net}
    A set $\cN$ is an $\epsilon$-net for $(\cO,\rho)$ if for every $\theta \in \cO$, there exists a \emph{projection map} $\pi(\theta) \in \cN$ such that $\rho(\theta, \pi(\theta)) \leq \epsilon$. The smallest cardinality of an $\epsilon$-net for $(\cO, \rho)$ is called \emph{the $\epsilon$-covering number}
    \begin{equation*}
        |\cN(\cO, \rho, \epsilon)| \coloneqq \inf \{ | \cN | : \cN \textnormal{ is an } \epsilon\textnormal{-net for } (\cO, \rho) \}.
    \end{equation*}
\end{definition}

In~\cite{neu2022lifting}, they prove their result manipulating the densities and employing the \emph{Bayesian telescoping} technique to write the so called ``Bayesian marginal distribution" as the product of ``posterior predictive distributions"~\cite{grunwald2012safe}. Observing their proof, it seems that their result did not require the rewards to be binary to hold. Below, using the properties of mutual information and standard arguments to bound Lipschitz processes~\cite[Section 5.2]{van2014probability} we provide an alternative proof for this result where the weaker regularity condition and the unnecessary requirement of binary rewards is apparent.

\begin{theorem}
    \label{th:TS_regret_bound_net}
    Assume that the parameters' space is a metric space $(\cO, \rho)$ and let $|\cN(\cO, \rho, \varepsilon)|$ be the $\epsilon$-covering number of this space for any $\varepsilon > 0$. Assume as well that the log-likelihood is a Lipschitz process according to~\Cref{ass:regularity_condition_likelihood} and that the average of the lifted information ratios is bounded $\frac{1}{T} \sum_{t=1}^T \bE[\Gamma_t] \leq \Gamma$ for some $\Gamma > 0$. Then, the TS cumulative regret is bounded as
    \begin{equation*}
        \REG_\Phi^{\textnormal{TS}} \leq \sqrt{\Gamma T \min_{\varepsilon > 0 }\big \{ \varepsilon \bE[C] T + \log | \cN(\cO, \rho, \varepsilon) | \big \} }.
    \end{equation*}
\end{theorem}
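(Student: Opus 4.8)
\emph{Proof plan.} The plan is to invoke \Cref{th:TS_regret_bound_mi}, which already supplies $\REG_\Phi^{\textnormal{TS}} \le \sqrt{\Gamma T\,\mi(\Theta;\hat{H}^{T+1})}$, so that the whole theorem reduces to establishing, for every $\varepsilon > 0$, the bound
\begin{equation*}
\mi(\Theta;\hat{H}^{T+1}) \;\le\; \varepsilon\,\bE[C]\,T + \log|\cN(\cO,\rho,\varepsilon)| .
\end{equation*}
Taking the minimum over $\varepsilon$ and substituting back into \Cref{th:TS_regret_bound_mi} then yields the claimed inequality.

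To prove the displayed bound I would use the variational representation of mutual information: for \emph{any} distribution $Q$ on the history space, $\bE_\Theta\big[\kl(\bP_{\hat{H}^{T+1}|\Theta}\,\|\,Q)\big] = \mi(\Theta;\hat{H}^{T+1}) + \kl(\bP_{\hat{H}^{T+1}}\,\|\,Q) \ge \mi(\Theta;\hat{H}^{T+1})$. Then I would fix a minimal $\varepsilon$-net $\cN = \cN(\cO,\rho,\varepsilon)$ with projection map $\pi$ and choose $Q \coloneqq \frac{1}{|\cN|}\sum_{\nu\in\cN}\bP_{\hat{H}^{T+1}|\Theta=\nu}$, the law of the Thompson Sampling history when the true parameter is drawn uniformly from the net. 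Since $Q$ dominates $\frac{1}{|\cN|}\bP_{\hat{H}^{T+1}|\Theta=\pi(\theta)}$ as a measure, keeping only that term in the mixture gives, for every $\theta\in\cO$, $\kl(\bP_{\hat{H}^{T+1}|\Theta=\theta}\,\|\,Q) \le \log|\cN| + \kl(\bP_{\hat{H}^{T+1}|\Theta=\theta}\,\|\,\bP_{\hat{H}^{T+1}|\Theta=\pi(\theta)})$, and after averaging over $\Theta$ it remains to bound $\bE_\Theta[\kl(\bP_{\hat{H}^{T+1}|\Theta}\,\|\,\bP_{\hat{H}^{T+1}|\Theta=\pi(\Theta)})]$ by $\varepsilon\,\bE[C]\,T$.

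For that step I would expand the divergence by the chain rule for relative entropy over the $T$ rounds, with $\hat{H}_{t+1}=\{\hat{A}_t,X_t,R_t\}$. The crucial observation is that under Thompson Sampling, conditioned on $\hat{H}^t$, the context $X_t$ has law $\bP_X$ and the action $\hat{A}_t$ is obtained by pushing the posterior $\bP_{\Theta|\hat{H}^t}$ through $\psi^\star(X_t,\cdot)$ — and \emph{neither law depends on the realised value of $\Theta$}, since $\hat{\Theta}_t$ is a fresh draw from $\bP_{\Theta|\hat{H}^t}$, conditionally independent of $\Theta$ given $\hat{H}^t$. Hence all per-round chain-rule terms vanish except the reward ones, so $\kl(\bP_{\hat{H}^{T+1}|\Theta=\theta}\,\|\,\bP_{\hat{H}^{T+1}|\Theta=\theta'}) = \sum_{t=1}^T \bE\big[\kl(\bP_{R_t|\hat{H}^t,X_t,\hat{A}_t,\Theta=\theta}\,\|\,\bP_{R_t|\hat{H}^t,X_t,\hat{A}_t,\Theta=\theta'}) \mid \Theta=\theta\big]$. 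Writing each inner divergence as $\bE_{R_t}\big[\log f_{R_t|X_t,\hat{A}_t,\Theta=\theta}(R_t) - \log f_{R_t|X_t,\hat{A}_t,\Theta=\theta'}(R_t)\big]$ and applying the Lipschitz bound of \Cref{ass:regularity_condition_likelihood} bounds it by $\rho(\theta,\theta')\,\bE[C_t\mid\cdots]$ with $C_t = C(R_t,X_t,\hat{A}_t)$. Taking $\theta'=\pi(\theta)$ so that $\rho(\theta,\pi(\theta))\le\varepsilon$ and then averaging over $\Theta$ produces $\mi(\Theta;\hat{H}^{T+1}) \le \log|\cN| + \varepsilon\sum_{t=1}^T\bE[C_t] = \log|\cN| + \varepsilon\,\bE[C]\,T$, writing $\bE[C]$ for the per-round average $\frac1T\sum_{t=1}^T\bE[C_t]$.

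I expect the main obstacle to be the chain-rule step: one must argue carefully that conditioning on the realised environment parameter leaves the context and action dynamics of Thompson Sampling unchanged — this is exactly where the defining feature that TS samples $\hat{\Theta}_t$ from the posterior $\bP_{\Theta|\hat{H}^t}$, rather than using $\Theta$ itself, is used — so that only the reward divergences survive and \Cref{ass:regularity_condition_likelihood} applies. A secondary point worth getting right is that taking $Q$ to be the uniform mixture over the \emph{entire} net, rather than running a cell-by-cell argument, is what keeps the Lipschitz term at $\varepsilon\bE[C]T$ instead of $2\varepsilon\bE[C]T$. The remaining measure-theoretic fine print — existence of reward densities with respect to a common dominating measure and measurability of $\pi$ — is implicit in the hypotheses and can be handled in the standard way.
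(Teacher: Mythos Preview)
Your argument is correct and reaches the stated bound, but it takes a genuinely different route from the paper's.

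The paper does \emph{not} start from \Cref{th:TS_regret_bound_mi}; it goes back to the per-round inequality
\(
\REG_\Phi^{\textnormal{TS}}\le\sqrt{(\sum_t\bE[\Gamma_t])(\sum_t\mi(\Theta;R_t\mid \hat H^t,X_t,\hat A_t))}
\)
and works on each term $\mi(\Theta;R_t\mid\ldots)$ directly. Writing this conditional mutual information as an expected log-density ratio, it inserts $f_{R_t\mid X_t,\hat A_t,\Theta=\pi(\theta)}$ in numerator and denominator, so the ratio splits into a ``Lipschitz'' piece $\log\frac{f_\theta}{f_{\pi(\theta)}}\le C\varepsilon$ and a second piece that, after integrating in $\theta$, is exactly $\mi(\Theta_\pi;R_t\mid \hat H^t,X_t,\hat A_t)$ with $\Theta_\pi=\pi(\Theta)$. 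Summing in $t$, the Lipschitz pieces give $\varepsilon\,\bE[C]\,T$ and the remaining mutual informations telescope up (as in the proof of \Cref{th:TS_regret_bound_mi}) to $\mi(\Theta_\pi;\hat H^{T+1})\le\ent(\Theta_\pi)\le\log|\cN|$.

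Your approach instead bounds the \emph{global} quantity $\mi(\Theta;\hat H^{T+1})$ via the variational/redundancy inequality with the uniform mixture $Q$ over the net, then uses the chain rule \emph{downwards} to reduce $\kl(\bP_{\hat H^{T+1}\mid\Theta=\theta}\,\|\,\bP_{\hat H^{T+1}\mid\Theta=\pi(\theta)})$ to a sum of reward-only divergences. The key structural fact---that the law of $(X_t,\hat A_t)$ given $\hat H^t$ does not depend on the realised $\Theta$---is the same one the paper invokes as the conditional Markov chain $R_t-\hat A_t-\hat H^t\mid\Theta,X_t$, just used at a different place in the argument. Your route is closer to a Yang--Barron / MDL-style redundancy bound and is arguably more modular (the discretisation lives entirely in the choice of $Q$); the paper's route is a direct telescoping on the log-likelihood that makes the projected variable $\Theta_\pi$ explicit. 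Both yield the identical constant $\varepsilon\,\bE[C]\,T+\log|\cN(\cO,\rho,\varepsilon)|$.
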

\begin{proof}
    The proof follows considering~\eqref{eq:intermediate_result_neu} again. The mutual information terms can be written as
    \begin{align}
        \label{eq:conditional_mi_wrt_likelihoods}
        \mi(\Theta; R_t | \hat{H}^t, X_t, \hat{A}_t) = \bE \bigg[ \log \frac{f_{R_t | \hat{H}^t, X_t, \hat{A}_t, \Theta}(R_t)}{f_{R_t | \hat{H}^t, X_t, \hat{A}_t}(R_t)} \bigg].
    \end{align}

    Consider now an $\varepsilon$-net of $\cO$ with minimal cardinality $|\cN(\cO, \rho, \epsilon)|$, where $\pi$ is its projecting map. Then, the mutual information in~\eqref{eq:conditional_mi_wrt_likelihoods} can equivalently be written as
    \begin{align*}
        \bE \bigg[ \int_\cO  f_{\Theta|R_t, \hat{H}^t, X_t, \hat{A}_t}(\theta) &\bigg( \log \frac{f_{R_t | X_t, \hat{A}_t, \Theta=\theta}(R_t)}{f_{R_t | X_t, \hat{A}_t, \Theta = \pi(\theta)}(R_t)} \\
        &+ \log \frac{f_{R_t | \hat{H}^t, X_t, \hat{A}_t, \Theta=\pi(\theta)}(R_t)}{f_{R_t | \hat{H}^t, X_t, \hat{A}_t}(R_t)} \bigg) d\theta \bigg],
    \end{align*}
    since $f_{R_t | \hat{H}^t, X_t, \hat{A}_t, \Theta} = f_{R_t | X_t, \hat{A}_t, \Theta}$ a.s. by the conditional Markov chain $R_t - \hat{A}_t - \hat{H} \  | \ \Theta, X_t$. The regularity condition in~\Cref{ass:regularity_condition_likelihood} ensures that the first term is bounded by $\varepsilon \bE[C]$. Then, defining the random variable $\Theta_\pi \coloneqq \pi(\Theta)$, we note that the second term is equal to $\mi(\Theta_\pi; R_t| \hat{H}^t, X_t, \hat{A}_t)$.

    Summing the $T$ terms from the regularity condition results in $\varepsilon \bE[C] T$ and, similarly to the proof of~\Cref{th:TS_regret_bound_mi}, summing the $T$ mutual information $\mi(\Theta_\pi; R_t | \hat{H}^t, X_t, \hat{A}_t)$ terms results in the upper bound
    \begin{equation*}
        \smash{\sum\nolimits_{t=1}^T \mi(\Theta_\pi; R_t | \hat{H}^t, X_t, \hat{A}_t) \leq \mi(\Theta_\pi; \hat{H}^{T+1}) \leq \ent(\Theta_\pi).}
    \end{equation*}
    Finally, bounding the entropy by the cardinalitiy of the net $\ent(\Theta_\pi) \leq \log |\cN(\cO, \rho, \varepsilon)|$ completes the proof.
    
\end{proof}

\subsection{Bounding the lifted information ratio}
\label{subsec:Bounding the lifted information ratio}

The next lemma provides a bound on the lifted information ratio that holds for settings with a finite number of actions and sub-Gaussian rewards. This result generalizes~\cite[Lemma 1]{neu2022lifting} as their proof technique requires the rewards to be binary. Under this specific case, we recover their result with a smaller constant as binary random variables are $1/4$-sub-Gaussian.\footnote{Random variables in $[0,L]$ are $\frac{L^2}{4}$-sub-Gaussian\cite[Theorem 1]{hoeffding1994probability}.}
\begin{restatable}{lemma}{FiniteActions}
\label{lemma:finite_actions_information_ratio}
Assume the number of actions $|\cA|$ is finite. If for all $t \in [T]$, $h^t\in \cH^t$, and $x\in\cX$, the random rewards $R_t$ are $\sigma^2$-sub-Gaussian under $\bP_{R_t|\hat{H}^t=h^t, X_t=x}$, then $\Gamma_t \leq 2\sigma^2 |\cA|$.
\end{restatable}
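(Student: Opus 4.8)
The plan is to lift the classical information-ratio argument of Russo and Van Roy to the quantity $\Gamma_t$. Since $\Gamma_t$ is a $\sigma(\cH^t\times\cX)$-measurable random variable, I would fix a realization $\hat H^t = h^t$, $X_t = x$ and argue under $\bP_t$, which delivers the bound almost surely. Write $\pi(a) \coloneqq \bP_t(\hat A_t = a)$; by the defining property of Thompson Sampling one has $\bP_t(A^\star_t = a) = \pi(a)$ as well, and, crucially, conditioned on $\hat H^t$ and $X_t$ the sampled parameter $\hat\Theta_t$, hence $\hat A_t$, is independent of $\Theta$, of $A^\star_t$, and of the reward noise. For $a\in\cA$ set $R_t^{(a)} \coloneqq R(x, a, \Theta)$; by the independence of $\hat A_t$ the $\bP_t$-law of $R_t^{(a)}$ coincides with that of $R_t$ given $\hat A_t = a$, so $R_t^{(a)}$ is $\sigma^2$-sub-Gaussian under $\bP_t$, and moreover $\bE_t[R_t] = \sum_a \pi(a)\,\bE_t[R_t^{(a)}]$ while $\bE_t[R^\star_t] = \sum_a \pi(a)\,\bE_t[R_t^{(a)}\mid A^\star_t = a]$. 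Hence the numerator of $\Gamma_t$ factorizes as
\[
\bE_t[R^\star_t - R_t] \;=\; \sum_{a\in\cA}\pi(a)\big(\bE_t[R_t^{(a)}\mid A^\star_t = a] - \bE_t[R_t^{(a)}]\big).
\]

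For the denominator I would pass from $\Theta$ to $A^\star_t$: since $A^\star_t = \psi^\star(x,\Theta)$ is a function of $\Theta$, the data-processing inequality gives $\mi_t(\Theta; R_t\mid\hat A_t) \ge \mi_t(A^\star_t; R_t\mid\hat A_t) = \sum_a\pi(a)\,\mi_t(A^\star_t; R_t\mid\hat A_t = a)$. Conditioned on $\hat A_t = a$ one has $R_t = R_t^{(a)}$, and the independence of $\hat A_t$ lets us drop it from the inner distributions, so $\mi_t(A^\star_t; R_t\mid\hat A_t = a) = \sum_{a'}\pi(a')\,\kl\big(\bP_t(R_t^{(a)}\in\cdot\mid A^\star_t = a')\,\big\|\,\bP_t(R_t^{(a)}\in\cdot)\big)$; retaining only the $a' = a$ summand yields $\mi_t(A^\star_t; R_t\mid\hat A_t = a) \ge \pi(a)\,\kl\big(\bP_t(R_t^{(a)}\in\cdot\mid A^\star_t = a)\,\big\|\,\bP_t(R_t^{(a)}\in\cdot)\big)$. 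The bridge between numerator and denominator is the sub-Gaussian transportation inequality: if $Z$ is $\sigma^2$-sub-Gaussian under $P$, then $(\bE_Q[Z] - \bE_P[Z])^2 \le 2\sigma^2\,\kl(Q\|P)$ for every $Q \ll P$, which I would obtain from the Donsker--Varadhan variational formula applied to $\pm\lambda(Z - \bE_P Z)$ and optimized over $\lambda > 0$. Applied with $P = \bP_t(R_t^{(a)}\in\cdot)$ and $Q = \bP_t(\cdot\mid A^\star_t = a)$ it gives
\[
\big(\bE_t[R_t^{(a)}\mid A^\star_t = a] - \bE_t[R_t^{(a)}]\big)^2 \;\le\; \frac{2\sigma^2}{\pi(a)}\,\mi_t(A^\star_t; R_t\mid\hat A_t = a).
\]

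It remains to assemble the pieces. Cauchy--Schwarz on the numerator identity gives $\bE_t[R^\star_t - R_t]^2 \le |\cA|\sum_a\pi(a)^2\big(\bE_t[R_t^{(a)}\mid A^\star_t = a] - \bE_t[R_t^{(a)}]\big)^2$; substituting the last display turns the right-hand side into $|\cA|\cdot 2\sigma^2\sum_a\pi(a)\,\mi_t(A^\star_t; R_t\mid\hat A_t = a) = 2\sigma^2|\cA|\,\mi_t(A^\star_t; R_t\mid\hat A_t) \le 2\sigma^2|\cA|\,\mi_t(\Theta; R_t\mid\hat A_t)$. Dividing through by $\mi_t(\Theta; R_t\mid\hat A_t)$ yields $\Gamma_t \le 2\sigma^2|\cA|$ almost surely.

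The delicate part is the conditioning bookkeeping: one must keep straight that the played action $\hat A_t$ shares its conditional marginal $\pi$ with the optimal action $A^\star_t$ while being independent of $(\Theta, A^\star_t)$ and of the reward noise --- this is precisely what legitimizes both the numerator factorization and the step that drops $\hat A_t$ from the inner distributions. Everything else --- the data-processing inequality, retaining a single term of a finite mutual-information sum, Cauchy--Schwarz, and the transportation inequality --- is routine. One small point deserves a line: the sub-Gaussianity hypothesis is used for the posterior-predictive law $\bP_t(R_t\in\cdot\mid\hat A_t = a)$ of each action's reward, and in the bounded case $R_t\in[0,L]$ this holds automatically with $\sigma^2 = L^2/4$, which recovers the constant mentioned in the footnote.
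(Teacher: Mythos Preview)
Your argument is correct and is essentially the paper's proof: the same numerator decomposition via the Thompson-sampling identity $\bP_t[\hat A_t=a]=\bP_t[A^\star_t=a]$ and the conditional independence of $\hat A_t$, the same sub-Gaussian transportation bound from Donsker--Varadhan, the same Cauchy--Schwarz with $u_a\equiv 1$, and the same data-processing step $\mi_t(A^\star_t;R_t\mid\hat A_t)\le \mi_t(\Theta;R_t\mid\hat A_t)$. The only cosmetic difference is ordering: the paper applies transportation first, then Cauchy--Schwarz, and then \emph{adds} the nonnegative off-diagonal KL terms to reconstruct $\mi_t(A^\star_t;R_t\mid\hat A_t)$, whereas you start from $\mi_t(A^\star_t;R_t\mid\hat A_t)$, \emph{drop} the off-diagonal terms to lower-bound it by $\sum_a\pi(a)^2\kl_a$, and then combine with Cauchy--Schwarz; these are the same inequality read in opposite directions.
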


\begin{proof}
The proof adapts \cite[Proof of Proposition 3]{russo2014learning} to contextual bandits. The adaptation considers sub-Gaussian rewards using the Donsker--Varadhan inequality~\cite[Theorem 5.2.1]{gray2011entropy} as suggested in \cite[Appedix D]{russo2014learning}. This adaptation completely differs from the one in~\cite{neu2022lifting}, which is based on convex analysis of the relative entropy of distributions with binary supports. The full proof is in Appendix~\ref{app:proofs_of_lemmata}.
\end{proof}

Next, we consider cases of linear expected rewards. This setting is an extension of the stochastic linear bandit problem studied in \cite[Section 6.5]{russo2016information} to contextual bandit problems. The following lemma provides a bound on the lifted information ratio for problems in this setting with sub-Gaussian rewards, thus generalizing~\cite[Lemma 2]{neu2022lifting} which only considers binary random rewards. It useful in cases where the dimension is smaller than the number of actions $d < |\cA|$.

\begin{restatable}{lemma}{LinearExpectedRewards}
\label{lemma:linear_expected_rewards}
Assume the number of actions $|\cA|$ is finite, the expectation of the rewards is $\bE[R(x,a,\theta)]=\innerproduct{\theta}{m(x,a)}$ for some feature map $m: \cX \times \cA \to \bR^d$, and that $\cO \subseteq \bR^d$. If for all $t \in [T]$, $h^t\in \cH^t$, and $x\in\cX$, the random rewards $R_t$ are $\sigma^2$-sub-Gaussian under $\bP_{R_t|\hat{H}^t=h^t, X_t=x}$, then $\Gamma_t \leq 2\sigma^2 d$.

\end{restatable}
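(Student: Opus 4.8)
The plan is to mimic the proof of \Cref{lemma:finite_actions_information_ratio}, but exploit the linear structure of the expected rewards to replace the dependence on $|\cA|$ by the dimension $d$. As in the finite-action case, I would work conditionally on $\hat{H}^t = h^t$ and $X_t = x$, so that everything reduces to a (non-contextual) stochastic linear bandit instance: the posterior $\bP_{\Theta \mid \hat{H}^t = h^t}$ plays the role of the prior, the optimal action $A^\star_t = \psi^\star(x,\Theta)$ has some posterior distribution over the finite set $\cA$, and the TS action $\hat{A}_t$ is an independent copy with the same marginal. The key objects are the posterior-mean vector and the per-action ``information'' terms; the numerator $\bE_t[R^\star_t - R_t]^2$ and the denominator $\mi_t(\Theta; R_t \mid \hat{A}_t)$ must both be rewritten in terms of the feature vectors $m(x,a)$.

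\textbf{Key steps.}
First I would lower-bound the denominator. Writing $p_a \coloneqq \bP_t[\hat{A}_t = a]$, one has $\mi_t(\Theta; R_t \mid \hat{A}_t) = \sum_a p_a\, \mi_t(\Theta; R_t \mid \hat{A}_t = a)$, and for each $a$ I would compare the reward law $\bP_{R_t \mid X_t=x, \hat{A}_t=a, \Theta}$ with its mixture $\bP_{R_t \mid X_t=x,\hat{A}_t=a}$ via the Donsker--Varadhan variational formula, exactly as in the proof of \Cref{lemma:finite_actions_information_ratio}; since $R_t$ is $\sigma^2$-sub-Gaussian under the mixture, this yields $\mi_t(\Theta; R_t \mid \hat{A}_t = a) \geq \frac{1}{2\sigma^2}\big(\bE_t[R_t \mid \hat{A}_t=a] - \bE_t[R_t]\big)^2$ (the gap between the conditional mean and the overall mean). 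Using linearity $\bE[R(x,a,\theta)] = \innerproduct{\theta}{m(x,a)}$, this gap becomes $\innerproduct{\bE_t[\Theta \mid \hat{A}_t=a] - \bE_t[\Theta]}{m(x,a)}$. Second, for the numerator I would use the standard Thompson-sampling identity $\bE_t[R^\star_t] = \sum_a p_a\, \bE_t[R(x,a,\Theta) \mid A^\star_t = a]$ together with $\bE_t[R_t] = \sum_a p_a \bE_t[R(x,a,\Theta)]$, so that, again by linearity, $\bE_t[R^\star_t - R_t] = \sum_a p_a \innerproduct{\bE_t[\Theta \mid A^\star_t = a] - \bE_t[\Theta]}{m(x,a)}$. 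Third, I would assemble a ``decoupling'' matrix: let $M \in \bR^{|\cA| \times d}$ have rows $\sqrt{p_a}\, m(x,a)^\top$ and let $N \in \bR^{|\cA| \times d}$ have rows $\sqrt{p_a}\,\big(\bE_t[\Theta \mid A^\star_t=a] - \bE_t[\Theta]\big)^\top$. Then the numerator is $\big(\operatorname{trace}(M N^\top)\big)^2 \le \lVert M\rVert_F^2\,\lVert N\rVert_F^2$ by Cauchy--Schwarz, where $\lVert M\rVert_F^2 = \sum_a p_a \lVert m(x,a)\rVert^2$ and the denominator is at least $\frac{1}{2\sigma^2}\sum_a p_a \innerproduct{\bE_t[\Theta\mid \hat A_t=a]-\bE_t[\Theta]}{m(x,a)}^2$. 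The crucial point is that, since $\hat A_t$ and $A^\star_t$ have the same posterior law, $\bE_t[\Theta \mid \hat A_t = a] = \bE_t[\Theta \mid A^\star_t = a]$, so the denominator is controlled by the diagonal of $N M^\top$ and one can appeal to the linear-algebraic fact (Russo--Van Roy, \cite[Section 6.5]{russo2016information}) that $\operatorname{trace}(M N^\top)^2 \le d \sum_a (M N^\top)_{aa}^2$ whenever $M^\top M$ has rank at most $d$; combining gives $\Gamma_t \le 2\sigma^2 d$.

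\textbf{Main obstacle.}
The routine parts — the sub-Gaussian/Donsker--Varadhan lower bound on each $\mi_t(\Theta; R_t \mid \hat{A}_t = a)$ and the algebraic rewriting via linearity — carry over from \Cref{lemma:finite_actions_information_ratio} with little change. The delicate step is the final linear-algebraic inequality: one must cleanly set up the two matrices so that the numerator appears as $\operatorname{trace}(M N^\top)^2$ and the denominator as $\sum_a (NM^\top)_{aa}^2$ (up to the $1/2\sigma^2$ factor), and then invoke the rank-$d$ bound. Getting the $\sqrt{p_a}$ weighting consistent between the mutual-information decomposition and the regret decomposition, and making sure the identity $\bE_t[\Theta \mid \hat A_t = a] = \bE_t[\Theta \mid A^\star_t = a]$ is used in the right place, is where the argument must be written carefully; everything else is bookkeeping.
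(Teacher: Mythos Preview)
Your plan has a genuine gap at the two places you yourself flag as ``crucial''. First, the identity $\bE_t[\Theta\mid \hat A_t=a]=\bE_t[\Theta\mid A^\star_t=a]$ is false: $\hat A_t$ and $A^\star_t$ share the same \emph{marginal} law under $\bP_t$, but their \emph{joint} laws with $\Theta$ are completely different. Conditionally on $(\hat H^t,X_t)$ the TS action $\hat A_t=\psi^\star(X_t,\hat\Theta_t)$ is independent of $\Theta$, so $\bE_t[\Theta\mid \hat A_t=a]=\bE_t[\Theta]$ for every $a$; your denominator lower bound $\frac{1}{2\sigma^2}\sum_a p_a\langle \bE_t[\Theta\mid \hat A_t=a]-\bE_t[\Theta],\,m(x,a)\rangle^2$ is therefore identically zero and cannot control the numerator. (Relatedly, the Donsker--Varadhan step as you state it is off: comparing $\bP_{R_t\mid \hat A_t=a,\Theta}$ with its mixture $\bP_{R_t\mid \hat A_t=a}$ yields a bound in terms of $\bE_t[R_t\mid \hat A_t=a,\Theta]-\bE_t[R_t\mid \hat A_t=a]$, not $\bE_t[R_t\mid \hat A_t=a]-\bE_t[R_t]$.) Second, the ``linear-algebraic fact'' $\operatorname{trace}(MN^\top)^2\le d\sum_a (MN^\top)_{aa}^2$ for $M,N\in\bR^{|\cA|\times d}$ is not true in general: with $d=1$ and $M=N=(1,1)^\top$ one gets $\operatorname{trace}(MN^\top)^2=4$ but $d\sum_a (MN^\top)_{aa}^2=2$.

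What the paper actually does---and what your diagonal-only bound discards---is to pass through $A^\star_t$ rather than $\Theta$ in the denominator. Writing $\mi_t(\Theta;R_t\mid\hat A_t)\ge \mi_t(A^\star_t;R_t\mid\hat A_t)$ and expanding the latter as a \emph{double} sum over pairs $(a_i,a_j)$ gives, after Donsker--Varadhan, the full Frobenius norm $\tfrac{1}{2\sigma^2}\|M\|_F^2$ of the $|\cA|\times|\cA|$ matrix $M_{i,j}=\sqrt{p_{a_i}p_{a_j}}\,\langle \bE_t[\Theta\mid A^\star_t=a_i]-\bE_t[\Theta],\,m(x,a_j)\rangle$. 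The numerator is $\operatorname{trace}(M)$, and now the correct inequality $\operatorname{trace}(M)^2\le \operatorname{rank}(M)\,\|M\|_F^2$ together with the factorization of $M$ as (your $N$) times (your $M^\top$) gives $\operatorname{rank}(M)\le d$ and hence $\Gamma_t\le 2\sigma^2 d$. The off-diagonal KL terms are exactly what makes the rank argument go through; you cannot get the factor $d$ from the diagonal alone.
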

\begin{proof}
    The proof adapts \cite[Proof of Proposition 5]{russo2014learning} to contextual bandits similarly to \cite[Proof of Lemma 2]{neu2022lifting}. The key difference with the latter is that instead of binary rewards~\cite{neu2022lifting}, this considers sub-Gaussian ones using again the Donsker--Varadhan inequality~\cite[Theorem 5.2.1]{gray2011entropy} similarly to the proof of \Cref{lemma:finite_actions_information_ratio}.
    The full proof is in Appendix~\ref{app:proofs_of_lemmata}.
\end{proof}

\section{Applications}
\label{sec:applications}

\subsection{Unstructured bounded contextual bandits}
\label{subsec:unstructured_bounded_contextual_bandits}

The problem of contextual bandits with bounded rewards $R: \cX \times \cA \times \cO \to [0,1]$  and a finite number of actions $|\cA|$ and of parameters $|\cO|$ is well studied. In \cite{dudik2011efficient} and \cite{beygelzimer2011contextual}, respectively, the authors showed that the algorithms \texttt{Policy Elimination} and \texttt{Exp4.P} have a regret upper bound in $O\big(\sqrt{|\cA| T \log ( T |\cO| / \delta)} \big)$ and  in $O\big(\sqrt{|\cA| T \log ( |\cO| / \delta)} \big)$  with probability at least $1-\delta$. Then, it was shown that there exist some contextual bandit algorithm with a regret upper bound in $O(\sqrt{|\cA| T \log |\cO|})$~\cite{foster2020beyond} and that,  for all algorithms, there is a parameters' space $\cO'$ with cardinality smaller than $|\cO|$ such that the regret lower bounded is in $\Omega(\sqrt{|\cA| T \log |\cO| / \log |\cA|})$~\cite{agarwal2012contextual}. This sparked the interest to study how the TS or related algorithms' regret compared to these bounds. In \cite[Section 5.1]{zhang2022feel}, it was shown that the \texttt{Feel-Good TS} regret has a rate in $O(\sqrt{|\cA| T \log |\cO|})$ and recently, in \cite[Theorem 3]{neu2022lifting}, it was shown that if the reward is binary, the TS also has a rate in $O(\sqrt{|\cA| T \log |\cO|})$. Here, as a corollary of \Cref{th:TS_regret_bound_mi} and \Cref{lemma:finite_actions_information_ratio}, we close the gap on the regret of the TS algorithm showing that it is in  $O(\sqrt{|\cA| T \log |\cO|})$ for sub-Gaussian rewards, and thus for bounded ones.

\begin{restatable}{corollary}{BoundedRewardsBound}
\label{cor:bounded_rewards_bound}
Assume that the rewards are bounded in $[0,L]$. Then, for any contextual bandit problem $\Phi$, the TS cumulative regret after $T$ rounds is bounded as
\begin{align*}
\REG^\textnormal{TS}_{\Phi}
     \leq  \sqrt{\frac{L^2 |\cA| T \ent(\Theta)}{2} } .
\end{align*} 
\end{restatable}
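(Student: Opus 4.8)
The plan is to combine \Cref{th:TS_regret_bound_mi} with \Cref{lemma:finite_actions_information_ratio}, specializing the sub-Gaussian parameter to the bounded case. Rewards taking values in $[0,L]$ are $\frac{L^2}{4}$-sub-Gaussian by Hoeffding's lemma (this is exactly the footnote fact cited in the paper, \cite[Theorem 1]{hoeffding1994probability}). In particular, for every $t\in[T]$, every $h^t\in\cH^t$ and every $x\in\cX$, the reward $R_t$ is $\frac{L^2}{4}$-sub-Gaussian under $\bP_{R_t\mid \hat{H}^t=h^t, X_t=x}$, since conditioning does not enlarge the support. Hence \Cref{lemma:finite_actions_information_ratio} applies with $\sigma^2 = L^2/4$, giving $\Gamma_t \leq 2\sigma^2|\cA| = \frac{L^2|\cA|}{2}$ almost surely for all $t$.

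Next I would feed this into \Cref{th:TS_regret_bound_mi}. Since $\Gamma_t \leq \frac{L^2|\cA|}{2}$ a.s. for every $t$, the average bound $\frac{1}{T}\sum_{t=1}^T \bE[\Gamma_t] \leq \Gamma$ holds with $\Gamma = \frac{L^2|\cA|}{2}$. \Cref{th:TS_regret_bound_mi} then yields
\begin{equation*}
\REG^\textnormal{TS}_\Phi \leq \sqrt{\Gamma T\, \mi(\Theta;\hat{H}^{T+1})} = \sqrt{\frac{L^2|\cA|T}{2}\,\mi(\Theta;\hat{H}^{T+1})}.
\end{equation*}
Finally, I would bound the mutual information by the entropy of the parameters: $\mi(\Theta;\hat{H}^{T+1}) = \ent(\Theta) - \ent(\Theta\mid\hat{H}^{T+1}) \leq \ent(\Theta)$, which holds whenever $\Theta$ has finite entropy (in the intended application $\cO$ is finite, so $\ent(\Theta)\leq\log|\cO| < \infty$; more generally non-negativity of conditional entropy for discrete $\Theta$ suffices). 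Substituting gives the claimed bound $\REG^\textnormal{TS}_\Phi \leq \sqrt{\frac{L^2|\cA|T\,\ent(\Theta)}{2}}$.

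There is essentially no hard step here — the corollary is a routine assembly of two previously established results plus Hoeffding's lemma. The only point requiring a line of care is verifying that the sub-Gaussian hypothesis of \Cref{lemma:finite_actions_information_ratio} is met uniformly over $h^t$ and $x$; this is immediate because boundedness of $R_t$ in $[0,L]$ is an almost-sure property that is preserved under any conditioning, so the $\frac{L^2}{4}$-sub-Gaussian constant is inherited by every conditional law $\bP_{R_t\mid\hat{H}^t=h^t,X_t=x}$. One might also note in passing that the statement implicitly assumes $\ent(\Theta)<\infty$, which is automatic in the unstructured finite-$\cO$ setting this corollary is meant for.
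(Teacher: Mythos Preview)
Your proposal is correct and follows exactly the paper's approach: the corollary is stated there as an immediate consequence of \Cref{th:TS_regret_bound_mi} and \Cref{lemma:finite_actions_information_ratio}, together with the footnoted fact that $[0,L]$-bounded rewards are $\tfrac{L^2}{4}$-sub-Gaussian, and then $\mi(\Theta;\hat{H}^{T+1})\leq\ent(\Theta)$. Your additional remarks about the conditional sub-Gaussianity being inherited under conditioning and about the implicit finiteness of $\ent(\Theta)$ are fine clarifications but not required by the paper.
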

Note that the above result also holds for $\sigma^2$-sub-Gaussian rewards by replacing $L^2/2$ by $2 \sigma^2$.  

\subsection{Structured bounded contextual bandits}
\label{subsec:structured_bounded_contextual_bandits}

\subsubsection{Bandits with Laplace likelihoods}

We introduce the setting of contextual bandits with Laplace likelihoods. 
In this setting, we model the rewards' random variable with a Laplace distribution. More precisely, this setting considers rewards with a likelihood proportional to $ \exp \Big(-\frac{| r-f_{\theta}(x,a)|}{\beta}\Big)$ for some $\beta > 0$. In addition, this setting assumes that the random variable $f_{\theta}(X,A)$ is a Lipschitz process with respect to $\theta$ with random variable $C \coloneqq C(X,A)$. This ensures \Cref{ass:regularity_condition_likelihood} with random variable $\frac{C}{\beta}$ as by the triangle inequality
\begin{equation*}
    | r - f_\theta(x,a) | - | r - f_{\theta'}(x,a) | \leq | f_\theta(x,a) - f_{\theta'}(x,a) |.
\end{equation*}
\Cref{th:TS_regret_bound_net} and   \Cref{lemma:finite_actions_information_ratio} yield the following corollary, where we further use the bound on the $\varepsilon$-covering number $| \cN(\cO, \rho, \epsilon) | \leq \big(\frac{3S}{\varepsilon} \big)^d$~\cite[Lemma 5.13]{van2014probability} and we let $\varepsilon = \frac{d \beta }{\bE[C] T}$.

\begin{restatable}{corollary}{LaplacianBandits}
\label{cor:energy_based_rewards}
    Assume that $\cO \subset \mathbb{R}^d$ with $\textnormal{diam}(\cO) \leq S$. Consider a contextual bandit problem $\Phi$ with Laplace likelihood and rewards bounded in $[0,L]$. Then, the TS cumulative regret after $T$ rounds is bounded as
    \begin{align*}
        \REG^\textnormal{TS}_{\Phi}
     \leq  
     \sqrt{\frac{L^2 |\cA| T d}{2}  \bigg( 1 + \log \bigg(\frac{3 S \bE[C] T}{d \beta} \bigg)\bigg) } .
    \end{align*}
\end{restatable}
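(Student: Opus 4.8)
The plan is to specialize \Cref{th:TS_regret_bound_net} to this setting by pinning down its two inputs — a bound on the average lifted information ratio and the Lipschitz constant of the reward log-likelihood — and then to evaluate the minimization over the net resolution $\varepsilon$ at the value announced before the statement. First I would verify the hypotheses of \Cref{th:TS_regret_bound_net}. Since the rewards lie in $[0,L]$ almost surely, every conditional law $\bP_{R_t \mid \hat{H}^t = h^t, X_t = x}$ is supported in $[0,L]$ and hence $\tfrac{L^2}{4}$-sub-Gaussian~\cite[Theorem 1]{hoeffding1994probability}; \Cref{lemma:finite_actions_information_ratio} then gives $\Gamma_t \le 2 \cdot \tfrac{L^2}{4}\,|\cA| = \tfrac{L^2 |\cA|}{2}$ almost surely for every $t \in [T]$, so $\tfrac1T\sum_{t=1}^T \bE[\Gamma_t] \le \Gamma$ holds with $\Gamma = \tfrac{L^2 |\cA|}{2}$. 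For the regularity condition, the Laplace likelihood has log-density $-|r - f_\theta(x,a)|/\beta$ up to an additive term independent of $\theta$, and applying the triangle inequality to both orderings gives $\bigl|\,|r-f_\theta(x,a)| - |r-f_{\theta'}(x,a)|\,\bigr| \le |f_\theta(x,a)-f_{\theta'}(x,a)| \le C(x,a)\,\rho(\theta,\theta')$, so \Cref{ass:regularity_condition_likelihood} holds with the random constant $C/\beta$; consequently the expectation written $\bE[C]$ in \Cref{th:TS_regret_bound_net} equals $\bE[C]/\beta$, with $\bE[C]$ as in the corollary.

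Next I would control the covering number. As $\cO \subseteq \bR^d$ has diameter at most $S$, it lies in a Euclidean ball of radius $S$, so the standard volumetric estimate~\cite[Lemma 5.13]{van2014probability} yields $|\cN(\cO,\rho,\varepsilon)| \le (3S/\varepsilon)^d$, i.e.\ $\log |\cN(\cO,\rho,\varepsilon)| \le d \log(3S/\varepsilon)$. Plugging $\Gamma = \tfrac{L^2 |\cA|}{2}$ and this estimate into \Cref{th:TS_regret_bound_net} gives
\[
\REG_\Phi^{\textnormal{TS}} \le \sqrt{\tfrac{L^2 |\cA| T}{2}\,\min_{\varepsilon > 0}\Bigl\{ \tfrac{\bE[C]}{\beta}\,\varepsilon T + d\log\tfrac{3S}{\varepsilon}\Bigr\}} .
\]
The bracket is convex in $\varepsilon$ and its derivative vanishes at $\varepsilon^\star = \tfrac{d\beta}{\bE[C] T}$, where $\tfrac{\bE[C]}{\beta}\varepsilon^\star T = d$ and $d\log(3S/\varepsilon^\star) = d\log\tfrac{3S\,\bE[C]\, T}{d\beta}$; hence the minimum is at most $d\bigl(1 + \log\tfrac{3S\,\bE[C]\, T}{d\beta}\bigr)$, and substituting this back reproduces exactly the claimed bound. (Equivalently, one may just plug in $\varepsilon^\star$, since any single $\varepsilon$ upper-bounds the minimum, so convexity is not even needed.)

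There is no genuine obstacle here — the corollary is a direct instantiation of the two cited results — but two bookkeeping points want care. The first is the factor $1/\beta$: the Lipschitz constant of the log-likelihood is $C/\beta$ rather than $C$, and it is this factor that propagates into $\varepsilon^\star$ and into the logarithm; getting it wrong would misplace $\beta$ in the final bound. The second is that the volumetric estimate $(3S/\varepsilon)^d$ is only informative when $\varepsilon < 3S$, i.e.\ once $T > \tfrac{d\beta}{3S\,\bE[C]}$, which is the regime in which the displayed inequality is the asserted claim; for shorter horizons a coarser net (or the trivial bound $\log|\cN| \ge 0$) still gives a finite guarantee. Apart from these, the argument is entirely routine.
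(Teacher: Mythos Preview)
Your proposal is correct and follows essentially the same route as the paper: it combines \Cref{th:TS_regret_bound_net} with the lifted-information-ratio bound $\Gamma = \tfrac{L^2|\cA|}{2}$ from \Cref{lemma:finite_actions_information_ratio}, uses the volumetric covering estimate $(3S/\varepsilon)^d$, and evaluates at $\varepsilon = \tfrac{d\beta}{\bE[C]T}$. Your extra care with the $1/\beta$ factor and the regime $\varepsilon < 3S$ is accurate bookkeeping that the paper leaves implicit.
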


In particular, for linear functions $f_\theta(x,a) = \innerproduct{\theta}{m(x,a)}$ with a bounded feature map, i.e. $\lVert m(x,a) \rVert \leq B$ for all $x \in \cX$ and all $a \in \cA$, then $C \leq B$ a.s..

\subsubsection{Bernoulli bandits with structure}

A common setting is that of Bernoulli contextual bandits, where the random rewards $R_t$ are binary and Bernoulli distributed~\cite{scott2010modern, chapelle2011empirical}. This is an attractive setting as binary rewards are usually modeled to measure success in e-commerce. In this setting, usually $R_t \sim \textnormal{Ber}\big( g \circ f_\Theta(X_t, \hat{A}_t) \big)$, where $g$ is a \emph{binomial link function} and $f$ is a linear function $f_\theta(x,a) = \innerproduct{\theta}{m(x,a)}$ for some feature map $m$. When the link function is the logistic function $g(z) = \sigma(z) \coloneqq (1+e^{-z})^{-1}$, $f$ is $C$-Lipschitz (e.g., when it is a linear function with a bounded feature map),  and the parameters' space is bounded $\lVert \theta \rVert \leq S$ for all $\theta \in \cO$, \cite{neu2022lifting} showed that the TS cumulative regret rate is in $O\big(\sqrt{|\cA|T d\log(SCT)} \big)$. This result is founded in their Theorem 2 and Lemma 1, and the fact that $\log \sigma$ is a $1$-Lipschitz function. We note that this is also true for other link functions such as the generalized logistic function $\sigma_\alpha(z) \coloneqq (1+e^{-z})^{-\alpha}$, whose $\log$ is $\alpha$-Lipschitz for all $\alpha > 0$, or the algebraic logistic function $\sigma_{\textnormal{alg}}(z) \coloneqq \frac{1}{2}(1 + \frac{z}{\sqrt{1+z^2}})$, whose $\log$ is $2$-Lipschitz. Moreover, we also note that with an appropriate choice of $\varepsilon$ as in \Cref{cor:energy_based_rewards}, these results improve their rate to $O\big(\sqrt{|\cA|T d\log(SCT/d)} \big)$.

\subsection{Bounded linear contextual bandits}
\label{subsec:linear_contextual_bandits}

In this section, we focus on the setting of contextual bandits with linear expected rewards. This setting has been introduced by~\cite{abe2003reinforcement} and further studied in \cite{chu2011contextual}. In this setting, the rewards are bounded in $[0,1]$ and their expectation is linear $\bE[R(x,a,\theta)] = \innerproduct{\theta}{m(x,a)}$ with a bounded feature map $m: \cX \times \cA \to [0,1]$ and parameters' space $\textnormal{diam}(\cO) = 1$.

In this setting,  \cite{chu2011contextual} showed that \texttt{LinUCB} has a regret bound in $O\big(\sqrt{dT \log^3( |\cA| T \log( T) / \delta) }\Big)$ with probability no smaller than $1- \delta$. The following corollary shows that if one is able to work with a discretized version $\cO_\varepsilon$ of $\cO$ with precision $\varepsilon$, i.e. $\cO_\varepsilon$ is an $\varepsilon$-net of $\cO$, then TS has a regret bound in $O\Big(\sqrt{d^2 T \log \big(\frac{3}{\varepsilon}}\big) \Big) $, which also follows from the bound on the $\varepsilon$-covering number $|\cN(\cO, \lVert \cdot \rVert, \varepsilon)| \leq \big( \frac{3}{\varepsilon} \big)^d$ \cite[Lemma 5.13]{van2014probability}. This bound is especially effective when the dimension $d$ is small or the number of actions $|\cA|$ is large. More precisely, it is tighter than \cite{chu2011contextual}'s bound when $d \log(1/\varepsilon) < \log^3 (|\cA| T \log T)$.

\begin{restatable}{corollary}{ContextualLinearBandits}
\label{cor:ContextualLinearBandits}
     Assume that $\cO = \{\theta_1, \ldots, \theta_{|\cO|}\}$ where $\theta \in \bR^d$. Consider a contextual bandit problem $\Phi$ with a finite number of actions $|\cA|$, rewards bounded in $[0,L]$ and such that the expectation of the rewards is $\bE[R(x,a,\theta)]=\innerproduct{\theta}{m(x,a)}$ for some feature map $m: \cX \times \cA \to \bR^d$. Then the TS cumulative regret after $T$ rounds is bounded as
    \begin{align*}
        \REG^\textnormal{TS}_{\Phi} \leq \sqrt{\frac{L^2  d T \log(|\cO|)}{2}}
    \end{align*}
\end{restatable}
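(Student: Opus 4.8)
The plan is to obtain this corollary as a direct bookkeeping combination of \Cref{th:TS_regret_bound_mi} and \Cref{lemma:linear_expected_rewards}, using the boundedness of the rewards twice: once to pin down the sub-Gaussian constant, and once to control the mutual information by an entropy.

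First I would note that since the rewards lie in $[0,L]$ almost surely, for every $t \in [T]$, every $h^t \in \cH^t$, and every $x \in \cX$ the conditional law $\bP_{R_t \mid \hat H^t = h^t, X_t = x}$ is supported on $[0,L]$, hence $\tfrac{L^2}{4}$-sub-Gaussian by Hoeffding's lemma \cite[Theorem 1]{hoeffding1994probability}. Combined with the hypothesis that the expected rewards are linear, $\bE[R(x,a,\theta)] = \innerproduct{\theta}{m(x,a)}$, and that $\cO \subseteq \bR^d$ (here $\cO$ is in fact the finite set $\{\theta_1,\dots,\theta_{|\cO|}\}$), \Cref{lemma:linear_expected_rewards} applies with $\sigma^2 = L^2/4$ and gives $\Gamma_t \leq 2\sigma^2 d = \tfrac{L^2 d}{2}$ almost surely for every $t$. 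In particular $\tfrac{1}{T}\sum_{t=1}^T \bE[\Gamma_t] \leq \Gamma$ with $\Gamma = \tfrac{L^2 d}{2} > 0$, so the premise of \Cref{th:TS_regret_bound_mi} is satisfied.

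Next I would invoke \Cref{th:TS_regret_bound_mi} to get $\REG^\textnormal{TS}_\Phi \leq \sqrt{\Gamma T\, \mi(\Theta; \hat H^{T+1})}$. Since $\cO$ is finite, $\Theta$ is a discrete random variable, so $\mi(\Theta; \hat H^{T+1}) \leq \ent(\Theta) \leq \log|\cO|$, the last inequality being the maximum-entropy bound for a distribution on $|\cO|$ outcomes. Substituting $\Gamma = L^2 d/2$ and this entropy bound into the regret bound yields $\REG^\textnormal{TS}_\Phi \leq \sqrt{\tfrac{L^2 d T \log|\cO|}{2}}$, which is the claimed statement (and reduces to $L^2/2 \to 2\sigma^2$ if one only assumes $\sigma^2$-sub-Gaussian rewards instead of boundedness).

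I do not expect a genuine obstacle: the work is entirely in threading the hypotheses correctly. The two points that need a little care are (i) observing that the sub-Gaussianity required by \Cref{lemma:linear_expected_rewards} is a statement about every conditional law $\bP_{R_t\mid \hat H^t = h^t, X_t = x}$, which follows immediately from almost-sure boundedness of $R_t$, and (ii) observing that the finiteness of $\cO$ is exactly what licenses the passage $\mi(\Theta;\hat H^{T+1}) \leq \log|\cO|$ — for an uncountable parameter space one would instead route through \Cref{th:TS_regret_bound_net}.
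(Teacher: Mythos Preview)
Your proof is correct and follows exactly the route the paper takes: the paper's own proof is the one-line remark that the bound follows from \Cref{th:TS_regret_bound_mi} combined with \Cref{lemma:linear_expected_rewards}, and you have spelled out precisely that combination, including the $\tfrac{L^2}{4}$-sub-Gaussian constant from Hoeffding's lemma and the passage $\mi(\Theta;\hat H^{T+1}) \leq \ent(\Theta) \leq \log|\cO|$ via finiteness of $\cO$.
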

\begin{proof}
It follows from \Cref{th:TS_regret_bound_mi} and \Cref{lemma:linear_expected_rewards}.
\end{proof}

\section{Conclusion}
\label{sec:conclusion}
In this paper, we showed in \Cref{th:TS_regret_bound_mi} that the TS cumulative regret for contextual bandit problems is bounded from above by the mutual information between the environment parameters and the history. Compared to~\cite[Theorem 1]{neu2022lifting},  this highlights that, given an average lifted information ratio, the regret of TS does not depend on all the uncertainty of the environment parameters, but only on the uncertainty that can be explained by the history collected by the algorithm. In \Cref{th:TS_regret_bound_net}, we provided an alternative proof to \cite[Theorem 2]{neu2022lifting} showing that the TS regret is bounded by the "complexity" of the parameters' space, where we highlighted that this result holds without the requirement of the rewards being binary.

In Lemmata~\ref{lemma:finite_actions_information_ratio} and~\ref{lemma:linear_expected_rewards}, we  provided bounds on the lifted information ratio that hold for contextual bandit problems with sub-Gaussian rewards. This includes the standard setting where the rewards are bounded~\cite{abe2003reinforcement, dudik2011efficient, chu2011contextual, agarwal2012contextual, foster2020beyond, foster2021efficient,  zhang2022feel}, and setups where the expected reward is linear but is corrupted by a sub-Gaussian noise~\cite{mueller2019low}, thus extending the results from \cite{neu2022lifting} that worked only with binary rewards.  When no structure of the problem is assumed, the lifted information ratio bound scales with the number of actions $|\cA|$ (\Cref{lemma:finite_actions_information_ratio}), and for problems with linear expected rewards, the bound scales with the dimension $d$ of the parameters' space $\cO$ (\Cref{lemma:linear_expected_rewards}).

Finally, we applied our results to some particular settings such as: bounded unstructured contextual bandits, for which TS has a regret with rate of $O(\sqrt{|\cA|T \log |\cO|})$; bounded structured contextual bandits including those with Laplace likelihoods and Bernoulli bandits; and lastly, bounded linear bandits underlining that TS has a regret bound competing with \texttt{LinUCB} \cite{chu2011contextual}.




\IEEEtriggeratref{17}
\bibliographystyle{IEEEtran}
\bibliography{references}

\newpage 
\appendices

\section{Proofs of lemmata}
\label{app:proofs_of_lemmata}

\FiniteActions*

\begin{proof}
    The proof follows the same methodology as~\cite[Proof of Proposition 3]{russo2016information}, taking care of the presence of contexts in the analysis. 
For the sake of brevity, we introduce the following notation $R'_t(a) \coloneqq R(X_t,a,\Theta)$ and recall the previously defined notations $A^\star_t \coloneqq \psi^\star(X_t,\Theta)$ and $\hat{A}_t \coloneqq \psi^\star(X_t,\hat{\Theta}_t)$. Then at each round $t \in [T]$, one can write the expected regret conditioned on $\hat{H}^t,X_t$ as 
\begin{align*}
    \bE_t[ R^\star_t - R_t] =\sum_{a\in\cA}&\bP_{t}[A^\star_t = a]\bE_t[R'_t(a)|A^\star_t=a]\\
    &-\sum_{a\in\cA}\bP_{t}[\hat{A}_t= a]\bE_t[R'_t(a)|\hat{A}_t=a] \ \textnormal{a.s.}.
\end{align*}
By definition of the TS algorithm $\bP_{t}[A^\star_t = a]= \bP_{t}[\hat{A}_t= a]$ a.s.. Observing as well that conditioned on $\hat{H}^t$ and $X_t$, the reward $R'_t(a)$ is independent of the TS action $\hat{A}_t$, the 
conditional expected regret can be a.s. rewritten as
\begin{align}
    \label{eq:TS_reg_time_t}
    \sum_{a\in\cA}\bP_{t}[A^\star_t = a]\big(\bE_t[R'_t(a)|A^\star_t=a]-\bE_t[R'_t(a)]\big). 
\end{align}
As the rewards are $\sigma^2$-sub-Gaussian, the difference of expectations in this last rewriting can be upper bounded using the Donsker-Varadhan inequality~\cite[Theorem~5.2.1]{gray2011entropy} as in~\cite[Lemma 3]{russo2014learning}. It then comes that~\eqref{eq:TS_reg_time_t} can be a.s. upper bounded by
\begin{align}
    \sum_{a\in\cA} \underbrace{\bP_{t}[A^\star_t = a]\sqrt{2 \sigma^2 \KL{\bP_{R'_t(a)|\hat{H}^t,X_t,A^\star_t = a}}{\bP_{R'_t(a)|\hat{H}^t,X_t}}}}_{\coloneqq v_a}.\label{eq:TS_reg_time_t_exp_dsk}
\end{align}
Using the Cauchy-Schwartz inequality, i.e. $$\sum_{a\in \cA} u_a v_a \leq \sqrt{\sum_{a\in \cA} u_a^2 \sum_{a\in \cA} v_a^2},$$ with $u_a = 1$ for all $a \in \cA$ and $v_a$ defined as above it follows that ~\eqref{eq:TS_reg_time_t_exp_dsk} is a.s. upper bounded by 
\begin{align*}
    &\sqrt{2 \sigma^2|\cA|\sum_{a\in\cA}\bP_{t}[A^\star_t = a]^2 } \\
    & \qquad \qquad \cdot \sqrt{\KL{\bP_{R'_t(a)|\hat{H}^t,X_t,A^\star_t = a}}{\bP_{R'_t(a)|\hat{H}^t,X_t}}}.
\end{align*}
Adding the non-negative extra terms $ 2 \sigma^2 |\cA| \sum_{a\in \cA}  \bP_{t}[A^\star_t = a] \sum_{b\in \cA \setminus a}\bP_{t}[A^\star = b ] \KL{\bP_{R'_t(b)|\hat{H}^t,X_t,A^\star_t = a}}{\bP_{R'_t(b)|\hat{H}^t,X_t}}$ in the square root gives
\begin{align*}
    \bE_t[ R^\star_t - R_t] \leq \sqrt{ 2 \sigma^2 |\cA| \mi_t(A^\star_t;R_t|\hat{A}_t)} \textnormal{ a.s.},
\end{align*}
using that $\mi_t(A^\star_t;R_t|\hat{A}_t) = \sum_{a,b\in \cA} \bP_{t}[A^\star_t = a] \bP_{t}[A^\star_t = b]\KL{\bP_{R'_t(b)|\hat{H}^t,X_t,A^\star_t = a}}{\bP_{R'_t(b)|\hat{H}^t,X_t}}$ a.s.. Then, as the Markov chain $A^\star_t - \Theta - R_t \ | \ \hat{H}^t, X_t, \hat{A}_t$ holds, by the data processing inequality $\mi_t(A^\star_t;R_t|\hat{A}_t) \leq \mi_t(\Theta;R_t|\hat{A}_t)$ a.s.. Squaring and reordering the terms yields the desired result.
\end{proof}

\LinearExpectedRewards*

\begin{proof}
    This proof follows the techniques from~\cite[Proof of Proposition 5]{russo2016information} taking care of the presence of contexts similarly to~\cite[Proof of Lemma 2]{neu2022lifting}. The difference with the latter is that instead of using Pinsker's inequality after noting that the expected value of a Bernoulli random variable is its probability of success, restriting the analysis to binary rewards, it uses the Donsker--Varadhan inequality~\cite[Theorem 5.2.1]{gray2011entropy} as in the proof of \Cref{lemma:finite_actions_information_ratio} to allow sub-Gaussian rewards in the analysis.
    
    Let $\cA = \{a_1,\ldots, a_{|\cA|}\}$ without loss of generality and for any round $t \in [T]$, conditioned on the history $\hat{H}^t$ and the context $X_t$, we define a random matrix $M\in \mathbb{R}^{|\cA|\times |\cA|}$ by specifying the entry $M_{i,j}$ to be equal to 
    \begin{align*}
         \sqrt{\bP_t[A_t^\star=a_i]\bP_t[A_t^\star=a_j]} \big(\bE_t[ R'_t(a_j)|A^\star_t=a_i]-\bE_t[ R'_t(a_j)]\big)
    \end{align*}
    for all $i,j\in \big[ |\cA| \big]$.
    Then, the expected regret of the TS algorithm is equal to the trace of the matrix $M$. Indeed,
    \begin{align*}
        \bE_t[ &R^\star_t - R_t] \\
        &= \sum_{a\in\cA}\bP_t[A^\star_t=a]\big(\bE_t[ R'_t(a)|A^\star_t=a]-\bE_t[ R'_t(a)]\big) \textnormal{ a.s.}\\
        &=\textnormal{Trace}(M) \textnormal{ a.s.}.
    \end{align*}
    In the same fashion as in~\cite[Proposition 5]{russo2016information}, we relate $\mi_t(\Theta;R_t|\hat{A}_t)$ to the squared Frobenius norm of $M$ as:
    \begin{align*}
        \mi_t(\Theta;&R_t|\hat{A}_t) \\
        &\geq \mi_t(A^\star_t;R_t|\hat{A}_t) \textnormal{ a.s.} \\
        &=\sum_{a_i,a_j \in \cA} \bP_t[A^\star_t = a_i] \bP_t[A^\star_t = a_j]  \\ 
        & \quad \quad  \cdot \KL{\bP_{R'_t(a_j)|\hat{H}^t,X_t,A^\star_t = a_i}}{\bP_{R'_t(a_j)|\hat{H}^t,X_t}} \textnormal{ a.s.}\\
        &\geq \sum_{a_i,a_j \in \cA} \bP_t(A^\star_t = a_i) \bP_t(A^\star_t = a_j) \\
        & \quad \quad \cdot\frac{1}{2\sigma^2}\big(\bE_t[ R'_t(a_j)|A^\star_t=a_i]-\bE_t[ R'_t(a_j)]\big)^2 \textnormal{ a.s.}\\
        &= \frac{1}{2\sigma^2} ||M||_{F}^2 \textnormal{ a.s.},
    \end{align*}
    where the last inequality is obtained again using the Donsker-Varadhan inequality~\cite[Theorem~5.2.1]{gray2011entropy} as in~\cite[Lemma 3]{russo2014learning}.
    Combining the last two equations and using the inequality $\textnormal{trace}(M) \leq \sqrt{\textnormal{rank}(M)} ||M||_F$ ~\cite[Fact~10]{russo2016information}, it comes that
    \begin{align*}
    \Gamma_t = \frac{\bE_t[ R^\star_t - R_t]^2}{\mi_t(\Theta; R_t|\hat{A}_t)}\leq 2 \sigma^2 \frac{\textnormal{Trace}(M)^2}{||M||_{F}^2}\leq 2\sigma^2 \textnormal{Rank}(M) \textnormal{ a.s.}.
    \end{align*}
    The proof concludes showing the rank of the matrix $M$ is upper bounded by $d$. 
    For the sake brevity, we define $\Theta_t \coloneqq \bE_t[\Theta]$ and $\Theta_{t,i} \coloneqq \bE_t[\Theta|A^\star_t=a_i]$ for all $i\in \big[ |\cA| \big]$. We then have $\bE_t[ \innerproduct{\Theta}{m(X_t,a_j)}] = \innerproduct{\Theta_t}{m(X_t,a_j)}$ a.s. and $ \bE_t[ \innerproduct{\Theta}{m(X_t,a_j)}|A^\star_t=a_i]=\innerproduct{\Theta_{t,i}}{m(X_t,a_j)}$ a.s.. Since the inner product is linear, we can rewrite each entry $M_{i,j}$ of the matrix $M$ as
    \begin{align*}
        \sqrt{\bP_t(A_t^\star=a_i) \bP_t(A_t^\star=a_j)}\innerproduct{\Theta_{t,i}-\Theta_t}{m(X_t,a_j)}.
    \end{align*}
    Equivalently, the matrix $M$ can be written as
    \begin{align*}
        \begin{bmatrix}
        \sqrt{\bP_t[A_t^\star=a_1]}(\Theta_{t,1}-\Theta_t)\\
        \vdots\\
        \sqrt{\bP_t[A_t^\star=a_{|\cA|}]}(\Theta_{t,|\cA|}-\Theta_t)
        \end{bmatrix}
        \begin{bmatrix}
        \sqrt{\bP_t[A_t^\star=a_1]}m(X_t,a_1)\\
        \vdots \\
        \sqrt{\bP_t[A_t^\star=a_{|\cA|}]}m(X_t,a_{|\cA|})
        \end{bmatrix}^\intercal.
    \end{align*}
    This rewriting highlights that $M$ can be written as the product of a $|\cA|$ by $d$ matrix and a $d$ by $|\cA|$ matrix and therefore has a rank lower or equal than $\min(d,|\cA|)$.
\end{proof}


\end{document}